\newcommand{\beq}{\begin{equation}}
\newcommand{\eeq}{\end{equation}}
\newcommand{\beqa}{\begin{eqnarray}}
\newcommand{\eeqa}{\end{eqnarray}}
\newcommand{\beqan}{\begin{eqnarray*}}
\newcommand{\eeqan}{\end{eqnarray*}}
\DeclareMathOperator*{\argmax}{arg\,max}
\DeclareMathOperator*{\expect}{\mathbb{E}}
\newcommand{\statespace}{\mathcal{X}}
\newcommand{\actionspace}{\mathcal{A}}
\newcommand{\valuespace}{\ensuremath{\mathcal{Q}}}
\newcommand{\valuepath}{\ensuremath{\mathcal{Q}}}
\newcommand{\defi}{\doteq}
\newtheoremstyle{thmstyle}
  {5pt} 
  {\topsep} 
  {} 
  {} 
  {\bfseries} 
  {.} 
  {.5em} 
  {} 
\theoremstyle{thmstyle}
\newtheorem{definition}{Definition}
\title{The Value-Improvement Path: \\Towards Better Representations for Reinforcement Learning}
\author {
        Will Dabney,\textsuperscript{\rm 1}
        Andr\'e Barreto, \textsuperscript{\rm 1}
        Mark Rowland, \textsuperscript{\rm 1} \\
        Robert Dadashi, \textsuperscript{\rm 2}
        John Quan, \textsuperscript{\rm 1}
        Marc G. Bellemare, \textsuperscript{\rm 2}
        David Silver \textsuperscript{\rm 1}\\
}
\begin{document}

\maketitle
\begin{abstract}
In value-based reinforcement learning (RL), unlike in supervised learning, the agent faces not a single, stationary, approximation problem, but a sequence of value prediction problems. Each time the policy improves, the nature of the problem changes, shifting both the distribution of states and their values.
In this paper we take a novel perspective, arguing that the value prediction problems faced by an RL agent should not be addressed in isolation, but rather as a single, holistic, prediction problem.
An RL algorithm generates a sequence of policies that, approximately, improve towards the optimal policy. 
We explicitly characterize the associated sequence of value functions and call it the \emph{value-improvement path}. Our main idea is to approximate the value-improvement path holistically, rather than to solely track the value function of the current policy. 
Specifically, we discuss the impact that this holistic view of RL has on representation learning. We demonstrate that a representation that spans the \emph{past} value-improvement path will also provide an accurate value approximation for \emph{future} policy improvements. 
We use this insight to better understand existing approaches to auxiliary tasks and to propose new ones.
To test our hypothesis empirically, we augmented a standard deep RL agent with an auxiliary task of learning the value-improvement path. In a study of Atari 2600 games, the augmented agent achieved approximately double the mean and median performance of the baseline agent.
\end{abstract}

\section{Introduction}\label{sec:intro}

Whether receiving prescriptive feedback (supervised learning), or delayed evaluative feedback (reinforcement learning), machine learning requires generalization from a finite collection of examples to an unseen population. In supervised learning, generalization is sometimes framed as avoiding \textit{overfitting} to a finite data set, and we might use methods such as regularization or cross-validation to accomplish this. In reinforcement learning (RL), generalization is no less important, and we have the same risk of overfitting to limited samples. But, additionally, 
the agent faces a \textit{sequence} of learning problems, as the policy is incrementally improved towards optimality. Ideally the agent should also generalize across these problems. Although other types of generalization play a vital role in all areas of machine learning, we focus on this particular form of generalization due to its unique role in reinforcement learning. We argue that the RL problem should be addressed with this peculiarity in mind.
 
The key problem underlying the ability to approximate any function is \emph{representation learning}. 
Although we are ultimately interested in the optimal value function, it has been shown that a representation specialized to this function may be inadequate for representing the sequence of functions leading to it~\citep{mccallum1996reinforcement,li06towards}. We take this argument one step further and note that we should avoid myopically overfitting the representation to \emph{any} value function in this sequence of functions, as each intermediate value function serves as a mere ``stepping stone'' along the path towards the optimal value function. 

In this paper we explicitly characterize the sequence of value functions produced by RL's policy improvement process, which we call the \textit{value-improvement path}. We prove that the efficacy of representation learning depends upon its ability to represent this path. We use this observation both to construct new algorithms and to understand existing algorithms for representation learning.
 
One common and successful way to approach the representation learning problem is through the use of \textit{auxiliary tasks}: additional prediction problems that shape the representation used by the agent \citep{jaderberg2016reinforcement,bellemare2019geometric}. We suggest that predictions based upon the value-improvement path provide a natural basis for representation learning. Furthermore, we analyze how well existing auxiliary tasks actually span the value-improvement path. We build on this analysis to propose novel auxiliary tasks designed with the value-improvement path in mind.

This paper provides several contributions. First, we characterize the nature of the value-improvement path.
Second, we analyze and discuss both existing and novel auxiliary tasks in relation to their effect on representation generalization. 
Third, we provide theoretical insights that begin to explain the role of the value-improvement path in representation learning for RL. 
Finally, we present results from an extensive experimental study of different auxiliary tasks. This study gives preliminary evidence that representations which accurately approximate the past value-improvement path may better approximate future functions on this path. 

\section{Background}\label{sec:background}

We consider a Markov decision process, or MDP, $M \defi (\statespace, \actionspace, P, r, \gamma)$, with finite state space $\statespace$ and action space $\actionspace$, transition kernel $P : \statespace \times \actionspace \rightarrow \mathscr{P}(\statespace)$, reward function $r: \statespace \times \actionspace \times \statespace \rightarrow \mathbb{R}$, and discount factor $\gamma \in [0, 1)$. 
Given a policy $\pi : \statespace \rightarrow \mathscr{P}(\actionspace)$, the action-value function associated with $\pi$ gives the expected return conditioned on each possible starting state-action pair: $Q^\pi(x, a) \defi \mathbb{E}_\pi\lbrack \sum_{t \geq 0} \gamma^t R_t | X_0 = x, A_0 = a \rbrack$, where $R_t \defi r(X_t, A_t, X_{t+1})$ are rewards. The task of \emph{evaluation} of a policy $\pi$ consists of computing $Q^\pi$. Given a policy $\pi$, we define the associated \emph{Bellman operator} as 
\begin{equation}
    \label{eq:policy_evaluation}
    \mathcal{T}^\pi Q(x, a) \defi \mathbb{E}_{\pi, P}\left[r(x, a, x') + \gamma Q(x', a') \right];
\end{equation}
it is well known that $Q^\pi$ is the fixed point of $\mathcal{T}^\pi$~\citep{puterman94markov}.
The task of \emph{control} consists of finding a policy $\pi^*$ maximizing the associated action-value function $Q^{*} \defi Q^{\pi^*}$. RL algorithms based on dynamic programming approach the control problem by alternating policy evaluation~(\ref{eq:policy_evaluation}) with \emph{policy improvement}, in which $Q^\pi$ is used to compute an improved policy 
\begin{equation}
    \label{eq:policy_improvement}
    \pi'(x) \defi \argmax_a Q^\pi(x,a).
\end{equation}
It can be shown that $Q^{\pi'} \succeq Q^\pi$, that is, $Q^{\pi'}(x,a) \ge Q^\pi(x,a)$ for all $(x,a) \in \statespace \times \actionspace$. The alternation between policy evaluation~(\ref{eq:policy_evaluation}) and policy improvement~(\ref{eq:policy_improvement}) can happen at many levels of granularity. For example, if~(\ref{eq:policy_improvement}) is followed by one application of~(\ref{eq:policy_evaluation}) we have the well known \emph{value iteration} algorithm. If instead we compute $Q^{\pi}$, which corresponds to applying $\mathcal{T}^\pi$ an infinite number of times, we recover \emph{policy iteration}~\citep{howard60dynamic}. Under some mild assumptions the alternation between~(\ref{eq:policy_evaluation}) and~(\ref{eq:policy_improvement}) at any level of granularity converges to $Q^{*}$~\citep{puterman94markov}. We will generically refer to algorithms obtained by alternating one application of~(\ref{eq:policy_improvement}) with $n$ applications of~(\ref{eq:policy_evaluation}), where $n$ is possibly infinite, as \emph{value-based} algorithms.

In RL it is assumed that the agent does not have access to the dynamics of the MDP, and thus the expectation in~(\ref{eq:policy_evaluation}) is replaced by samples from $P(\cdot|x,a)$. Many RL algorithms can thus be understood as stochastic approximations of their dynamic programming counterparts; for example, the stochastic version of value iteration is the well known $Q$-learning algorithm~\citep{watkins1992q}. For the sake of exposition, we will refer to both the state-value function $V^\pi$ and the action-value function $Q^\pi$ simply as \textit{value functions}, using their respective symbols to clarify when needed. 

\begin{figure}
    \centering
    \includegraphics[width=.4\textwidth]{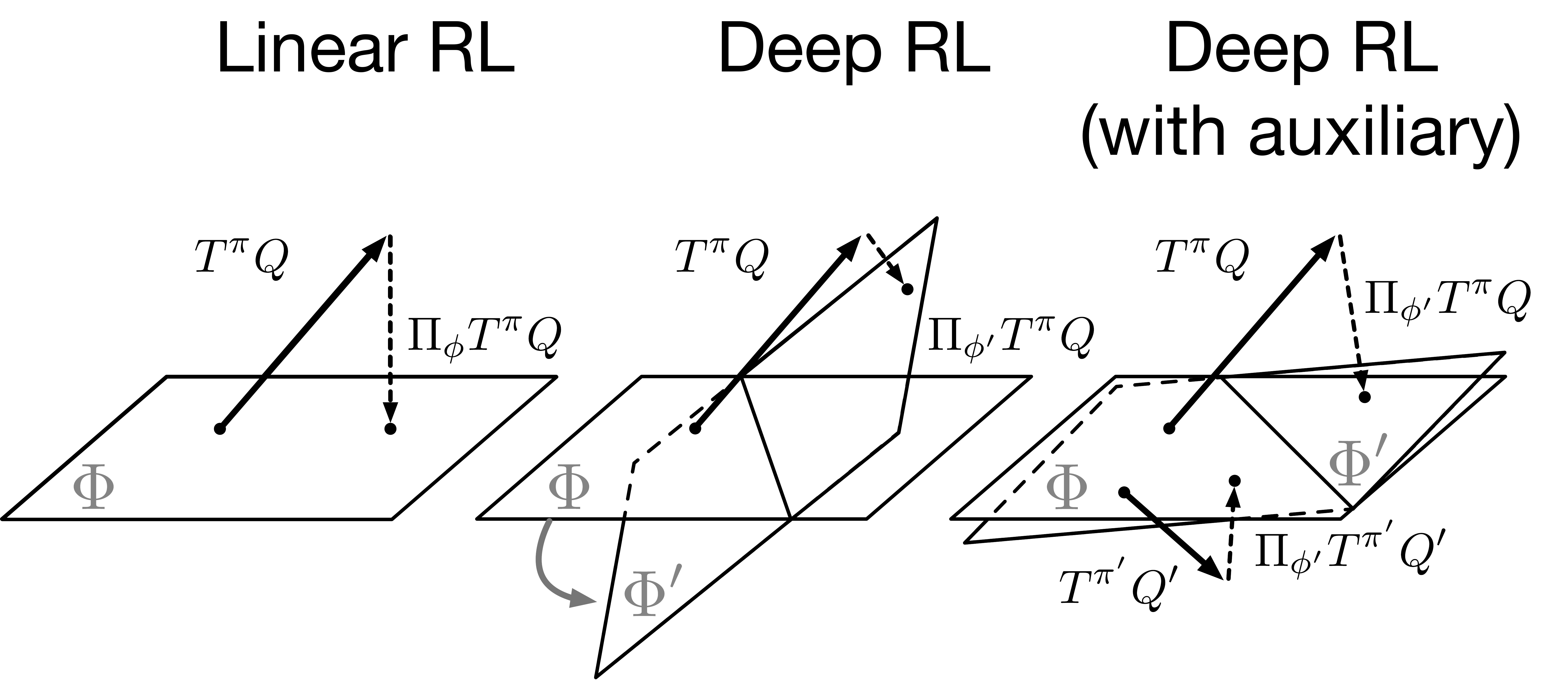}
    \caption{\emph{Linear} approximation in RL projects the Bellman targets onto the space spanned by features. In \emph{Deep RL} the representation itself moves to better fit the targets. However, this can lead to overfitting to the current value function. With \emph{auxiliary tasks} the representation is regularized, reducing representation overfitting. \label{fig:generalization}}
\end{figure}
\textbf{Representation learning.\quad}Usually the state-action space $\statespace \times \actionspace$ is too big to allow for an explicit representation of the functions $Q^\pi$, so one must resort to an approximation $\hat{Q}^\pi$. 
A common approach is to parametrize $\hat{Q}^\pi(x, a)= \phi(x)^\top \theta_a$, where $\phi : \statespace \rightarrow \mathbb{R}^K$ are features, $\theta_a \in \mathbb{R}^K$ are modifiable parameters associated with each $a \in \actionspace$, and $K \in \mathbb{N}$. Given a set of
(possibly non-linear)
features $\phi : \statespace \rightarrow \mathbb{R}^K$, policy evaluation comes down to computing the linear weights $(\theta_a | a \in \actionspace)$ such that $\hat{Q}^\pi \approx Q^\pi$. This involves the projection of the Bellman operator onto the space spanned by $\phi$, $\Pi_{\Phi} \mathcal{T}^\pi \hat Q^\pi$ for $\Phi = \langle \phi \rangle := \text{Span}(\{\phi^\pi_1,\ldots,\phi^\pi_k\})$ (Figure~\ref{fig:generalization}, left).
The problem of constructing the basis functions $\phi$ is known as \emph{representation learning}. Generally $K \ll |\statespace|$, so, for a fixed $\phi$, the space of expressible value functions is much smaller than the space of all possible value functions. Intuitively, then, a well-trained map $\phi$ should be such that $\phi(x)$ captures the salient information of $x \in \statespace$ for return prediction, and ignores irrelevant details.

Representation learning has been extensively studied as a separate step in which one learns features $\phi$ to be later used with linear function approximation. Fixed basis methods such as \textit{tile-coding} \citep{sutton2018reinforcement} and the \textit{Fourier basis} \citep{konidaris2011value} often work well for small input dimensions. However, the field has continually searched for methods that would adapt the representation to the MDP structure. For example, proto-value functions \citep{mahadevan2005proto} attempt to capture the eigenvectors of the matrix $(I - \gamma P^{\pi})^{-1}$, where $P^\pi(x' \mid x) := \mathbb{E}_{\pi} P(x' \mid x, a)$ for policy $\pi$, and are closely related to successor representations \citep{dayan1993improving}, slow-feature analysis \citep{sprekeler2011relation}, and successor features \citep{barreto2017successor}. 

In \emph{deep reinforcement learning} $\phi$ and $\theta_a$ are jointly learned as a (deep) neural network \citep{mnih2015human}. As long as the network has sufficient representational capacity, given enough training experience the learned representation $\phi(x)$ will be able to approximate a policy's value function arbitrarily well. Thus, some may ask if representation learning in RL is still an open problem. We argue for the affirmative, largely due to the need for generalization to future value functions. When training in deep RL the representation itself changes to better fit the value function (Figure~\ref{fig:generalization}, center). This is desirable, but can also lead to highly temporally correlated features \citep{kolter2009regularization}, and a representation that is degenerate, in terms of having limited span. That is, in common machine learning terms, deep RL can, and does, overfit to the current value function.

\textbf{Auxiliary tasks.\quad} It has been argued in the literature that one way to capture the relevant information for a good representation is to learn about many aspects of the world in addition to learning a value function \citep{parr08analysis,zhaoparr16}. One idea in this direction is to define pseudo-rewards, or \emph{cumulants}, $c: \statespace \times \actionspace \times  \statespace \rightarrow \mathbb{R}$, and treat them as actual rewards, either learning the value of a fixed policy or solving the induced control problem in parallel with the solution of the problem of interest \citep{sutton2011horde,jaderberg2016reinforcement}. The intuition here is that these additional tasks, called \emph{auxiliary tasks}, help shape the representation $\phi$ (Figure~\ref{fig:generalization}, right), and thus limit overfitting.

\section{The value-improvement path}\label{sec:value_path}

The fact that in value-based RL policies are computed through~(\ref{eq:policy_improvement}) allows us to think about the problem strictly in terms of value functions. Ultimately, we are interested in the optimal value function $Q^*$, from which an optimal policy can be readily computed. However, unlike in supervised learning, in RL we do not have access to samples of $Q^*$, and to estimate this function the agent must traverse a path across the space of value functions $\valuespace= \{Q^\pi | \pi \in \mathscr{P}(\actionspace)^\statespace\}$.

This special structure of the RL problem creates a number of challenges. Since we do not have direct access to the target function we are trying to approximate, we generally use the approximation itself to build the targets---a strategy sometimes referred to as ``bootstrapping''~\citep{sutton2018reinforcement}. This creates a cyclic dependence of the approximation on itself that can lead to instabilities~\citep{baird1995residual,bertsekas1996neuro,van2018deep}. In fact, many of the techniques currently adopted in deep RL, like target networks and replay buffers, can be interpreted as strategies to ameliorate this instability~\citep{mnih2015human}.

However, in this paper we focus on another challenging aspect of the RL problem that has perhaps been overlooked so far. We argue that, when learning a representation $\phi(x)$, we should keep in mind that we are traversing the space of value functions, and thus over-specializing $\phi(x)$ to a particular value function is analogous to overfitting to a finite dataset in supervised learning. In the same way that we take measures to prevent overfitting in supervised learning, we should adopt strategies to avoid over-specialization of $\phi(x)$. 

But how can we tailor $\phi(x)$ to a set of value functions that is not known in advance? One possible approach is to characterize the \emph{entire} space of value functions \valuespace\ and try to shape $\phi(x)$ in order to represent this space as well as possible. \citet{dadashi2019value} showed that the space of state value functions $\mathcal{V} = \{V^\pi | \pi \in \mathscr{P}(\actionspace)^\statespace\}$ forms a polytope. Based on the theory developed by \citet{dadashi2019value}, it is straightforward to show that, as an affine image of $\mathcal{V}$, \valuespace\ is also a polytope, so we will use ``value polytope'' to generically refer to both $\mathcal{V}$ and \valuespace.

Building on \citeauthor{dadashi2019value}'s insight, \citet{bellemare2019geometric} proposed to shape $\phi(x)$ by learning a set of auxiliary tasks corresponding to value functions that cover the value polytope as well as possible. 
Although \citeauthor{bellemare2019geometric}'s approach is a clear step forward towards recognizing the nature of the approximation problem in RL, we argue that shaping the representation taking the entire value polytope $\valuespace$ (or $\mathcal{V}$) into account may be a stringent requirement in practice.
This is based on the observation that the value functions of interest form a set that is generally much smaller than \valuespace.  

As discussed in Section~\ref{sec:background}, any value-based RL algorithm computes a sequence of functions that, under some assumptions, end in the optimal value function $Q^*$. We will call this trajectory in function space the \emph{value-improvement path}, and formally define it as follows:
\vspace{2mm}
\begin{mdframed}
\vspace{1mm}
\begin{definition} 
 A sequence $\{Q_0, Q_1, ..., Q^*\}$ is called a \emph{value-improvement path} if $Q_{i+1} \succeq Q_{i}$ for $i = 0,1,...$.
\end{definition}
\end{mdframed}

Value-improvement paths are worth investigating because they tend to (approximately) reflect the behavior of algorithms of practical interest. In addition, it might be possible to exploit the structure in this type of sequence to improve the generalization ability of the associated algorithm. In the next section we illustrate these points with a specific example of value-improvement path.

\subsection{A prototypical example of value-improvement path}

In order to provide intuition on the concept of value-improvement path, it might be instructive to consider for a moment the scenario studied by dynamic programming, where it is assumed that the dynamics of the MDP $P(\cdot|x,a)$ are known~\citep{puterman94markov}. This allows for the definition of algorithms whose value-improvement paths can be easily analyzed. Perhaps the dynamic programming algorithm whose value-improvement path is easiest to visualize is policy iteration~\citep{howard60dynamic}. Policy iteration has very simple dynamics: starting from a policy $\pi_0$, compute its value function, $Q^{\pi_0}$, derive an improved policy $\pi_1$ based on~(\ref{eq:policy_improvement}), and so on, until $Q^{\pi_i} = Q^{\pi_{i+1}}$. 

The initial value function $Q^{\pi_0}$ is sufficient to fully define policy iteration's  value path $\{Q^{\pi_0}, Q^{\pi_1}, ..., Q^*\}$; we will thus use $\valuepath^{\pi}$ to refer to policy iteration's value-improvement path starting at $Q^\pi$. The value-improvement path $\valuepath^\pi$ has several interesting properties (see Appendix Figure~\ref{fig:tree}):

  \begin{enumerate}
    \item \label{it:order} {\bf Order}: $\valuepath^{\pi}$ is a totally ordered set, since for any two $Q^{\pi'}, Q^{\pi''} \in \valuepath^{\pi}$ it must be the case that either $Q^{\pi'} \succeq Q^{\pi''}$ or $Q^{\pi'} \preceq Q^{\pi''}$. This is in contrast with \valuespace, which is a partially ordered set.
    \item \label{it:overlap} {\bf Structure}: 
    As long as there is a deterministic way to break ties in~(\ref{eq:policy_improvement}), we can think of the space composed of all policy iteration's value-improvement paths as a tree-like structure in which the optimal value function $Q^*$ is the root, the first level has all the value functions that lead to $\pi^*$ in one application of (\ref{eq:policy_improvement}), and so on. Seen this way, it is clear that two value-improvement paths $\valuepath^{\pi}$ and $\valuepath^{\pi'}$ can intersect at arbitrary levels of the tree, and if they meet in $Q^{\pi_i}$ they overlap from that point up, all the way to the root of the tree. More formally, if $Q^{\pi_i} \in \valuepath^{\pi}$ and $Q^{\pi_i} \in \valuepath^{\pi'}$, given $Q^{\pi_j} \in \valuepath^{\pi}$ such that $Q^{\pi_j} \succeq Q^{\pi_i}$, then $Q^{\pi_j} \in \valuepath^{\pi'}$. 
    \item \label{it:size} {\bf Size}: If $Q^{\pi} \prec Q^{\pi'}$, then $Q^{\pi} \notin \valuepath^{\pi'}$. Also, although $Q^{\pi'} \in \valuepath^{\pi}$ implies that $Q^{\pi'} \succeq Q^{\pi}$, the converse is not necessarily true. Importantly, for a fixed discount factor $\gamma$, the size of any value-improvement path $\valuepath^{\pi}$ is polynomial in $|\statespace|$ and $|\actionspace|$, even though the number of improving policies can be exponentially large in $|\statespace|$ \citep{ye2011simplex}.
\end{enumerate}

The properties above shed some light on the RL representation learning problem. Property~\ref{it:overlap} indicates that the features $\phi(x)$ should always be able to provide a good approximation of $Q^*$---a fact that is not very surprising. Perhaps more insightful is the fact that, although all value-improvement paths end at the same point $Q^*$, the trajectory they define in the value-function space \valuespace\ can be quite distinct. This suggests that the representation learning problem is context-dependent, in the sense that it can change considerably depending on the value function used as a starting point for the policy iteration process. Another interesting fact, implied by Property~\ref{it:size}, is that, once we know $Q^\pi$, we should only care about the value functions $Q^{\pi'} \in \valuepath^{\pi}$. Since this set of value functions is in general much  smaller than the entire polytope $\valuespace$ (see Property~\ref{it:size}), focusing our attention to $\valuepath^{\pi}$ can significantly influence how we approach representation learning.

The structure of the value-improvement path may change depending on how exactly policy evaluation and policy improvement are applied. For example, if policy improvement~(\ref{eq:policy_improvement}) is applied to a subset of the state space $\statespace$ only, there might be many paths from a given function $Q^{\pi_i}$ to the end-point of the path, $Q^*$. This means that the tree structure described in Property~\ref{it:overlap} would be replaced by a directed acyclic graph. Similarly, one should expect the structure of the value-improvement path to change if policy evaluation is not carried out to completion. For example, the value iteration algorithm alternates between a single application of the Bellman operator $\mathcal{T}^\pi$ defined in~(\ref{eq:policy_evaluation}) and one application of the policy improvement operator~(\ref{eq:policy_improvement}); in this case the resulting value-improvement path will also be quite distinct from the one induced by policy iteration---in fact, it has been shown that the intermediate functions obtained by value iteration do not belong to the value polytope $\valuespace$~\citep{dadashi2019value}. Another example of modified policy evaluation is when policy iteration is performed using approximations $\hat{Q}^{\pi} \approx Q^\pi$. In this case the resulting value path $\hat{\valuepath}^\pi \defi \{ \hat{Q}^{\pi_1}, \hat{Q}^{\pi_2}, \dots \}$ may no longer be a value-improvement path~\citep{bertsekas1996neuro}. The value-improvement path also changes when we move from dynamic programming to RL, in which it is assumed that the agent does not have access to the dynamics of the MDP. Since in this case policy evaluation~(\ref{eq:policy_evaluation}) is applied based on samples from $P(\cdot|x,a)$, one has a \emph{distribution} over possible value-improvement paths.

In this paper we will repeatedly refer to policy iteration's value-improvement path $\valuepath^\pi$ as a prototypical example of this type of trajectory in function space. In the same way that knowledge of the structure underlying the true space \valuespace\ helps to shape the representation $\phi(x)$, we argue that the properties of $\valuepath^\pi$ as defined above can help us determine a suitable $\phi(x)$ regardless of the specific way policy evaluation and policy improvement are carried out. We elaborate on this point next. 

\section{Representation learning through the lens of the value-improvement path}
\label{sec:rep_aux_tasks}

\begin{figure*}
    \centering
    \includegraphics[width=\textwidth]{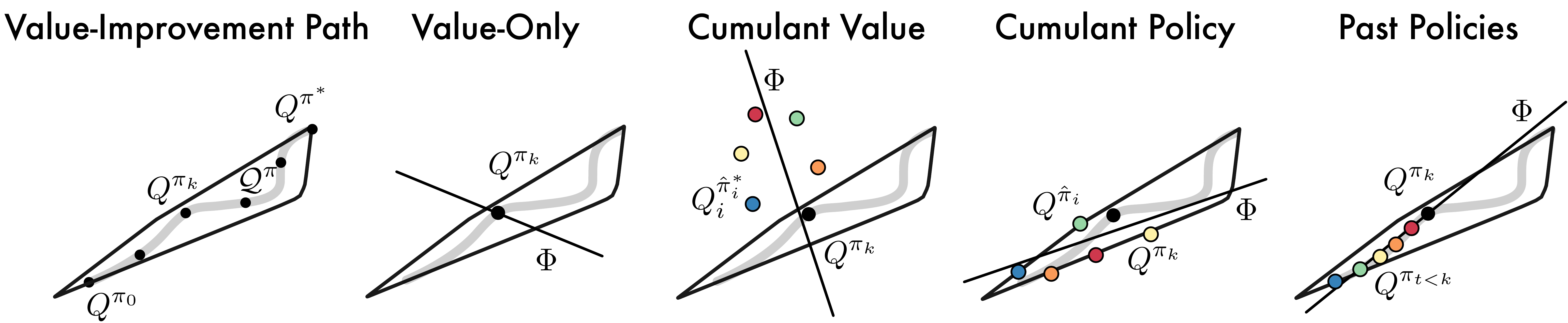}
    \caption{Illustration of the relationship between the value-improvement path, $\mathcal{Q}^\pi$, and auxiliary tasks discussed in this work (shown as colored dots). The outlined space shows the hull of the value polytope, while the gray curve denotes the value-improvement path. Potential representations are shown by a solid line denoted $\Phi$.}
    \label{fig:valuepath}
\end{figure*}

We now revisit the formulation of the representation learning problem in light of the concept of value-improvement path. Currently representation learning in deep RL is tackled in two ways that are, in some sense, the extremes of a spectrum of possibilities. On one extreme of the spectrum we have the common practice of ignoring the special structure of the RL problem and shaping the representation $\phi(x)$ looking only at the current value function. As discussed, this can lead to overfitting. On the other extreme we have the recently-proposed approach of shaping $\phi(x)$ considering the entire value polytope \valuespace, which may not be scalable~\citep{bellemare2019geometric}. Here we propose an intermediate formulation: representation learning in deep RL should be seen as the search for $\phi(x)$ that allows for good approximations of all value functions in an algorithm's value-improvement path. Using again policy iteration as a prototypical reference point, we now formally motivate this objective by restating a result by \citet{munos2003error} in terms of policy iteration's approximate value path $\hat{\valuepath}^\pi$.

Let $\|Q\|^2_{d_\mu} = \sum_{(x, a) \in \statespace\times\actionspace} d_\mu(x, a)Q(x, a)^2$ be the (squared) Euclidean norm on $\mathbb{R}^{\statespace\times\actionspace}$ weighted by the training state-action distribution $d_\mu$, and, for any subspace $U \subseteq \mathbb{R}^{\statespace}$, let $\Pi^{d_{\mu}}_{U} : \mathbb{R}^{\statespace\times\actionspace} \rightarrow \mathbb{R}^{\statespace\times\actionspace}$ denote the orthogonal projection into the subspace $U^\actionspace \subseteq \mathbb{R}^{\statespace\times\actionspace}$ with respect to $\|\cdot\|^2_{d_{\mu}}$ (when it is clear from context, we will drop notational dependence on $d_\mu$).

\begin{restatable}{theorem}{thmLSpolicy iteration}
\label{thm:lspi}
Consider a policy $\pi$, a distribution over state-action pairs, $d_\mu$, and a representation $\phi^\pi: \statespace \to \mathbb{R}^K$. Suppose that policy iteration's approximate value path $\hat \valuepath^\pi$ is well-approximated in the sense that for some $\epsilon \ge 0$ and related distributions $d_{\mu_k}$ (Definition~\ref{def:munos_dist}, Appendix~\ref{sec:proofs}),
\begin{equation}
    \| \Pi_{\langle \phi^\pi \rangle} Q - Q \|_{d_{\mu_{k}}} \le \epsilon,\quad \forall Q \in \hat \valuepath^\pi, k \in [K].
\end{equation}
Then, for $\pi_k$ representing the policy at the $k$\textsuperscript{th} iteration of approximate policy iteration starting with $\pi_0 = \pi$, we have
\begin{equation*}
    \limsup_{k \to \infty} \| Q^* - Q^{\pi_k} \|_{d_\mu} \le \frac{2\gamma\epsilon}{(1 - \gamma)^2}.
\end{equation*}
\end{restatable}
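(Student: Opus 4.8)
The plan is to reduce Theorem~\ref{thm:lspi} to the $L_p$ error-propagation analysis of approximate policy iteration due to \citet{munos2003error}, with the spanning hypothesis on $\phi^\pi$ supplying only the per-iteration evaluation error. Throughout write $P^{\pi}$ for the stochastic matrix on $\statespace\times\actionspace$ induced by $\pi$, so that $\mathcal{T}^{\pi}Q=r+\gamma P^{\pi}Q$, and let $\mathcal{T}$ be the Bellman optimality operator. \emph{Step 1 (representation $\Rightarrow$ bounded evaluation error).} At iteration $k$ approximate policy iteration evaluates $\pi_k$ by returning some $\hat Q^{\pi_k}\in\langle\phi^\pi\rangle$. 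Taking $\hat Q^{\pi_k}$ to be the $d_{\mu_k}$-orthogonal projection of $Q^{\pi_k}$ onto $\langle\phi^\pi\rangle$ (for the projected-fixed-point form of least-squares policy iteration, $\hat Q^{\pi_k}=\Pi_{\langle\phi^\pi\rangle}\mathcal{T}^{\pi_k}\hat Q^{\pi_k}$, which differs from $Q^{\pi_k}$ by the projection error up to a fixed $\gamma$-dependent factor that can be absorbed into $\epsilon$), the hypothesis $\|\Pi_{\langle\phi^\pi\rangle}Q-Q\|_{d_{\mu_k}}\le\epsilon$ for $Q\in\hat{\valuepath}^\pi$ gives $\|\varepsilon_k\|_{d_{\mu_k}}\le\epsilon$ with $\varepsilon_k\defi\hat Q^{\pi_k}-Q^{\pi_k}$.

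\emph{Step 2 (componentwise loss recursion).} Let $\pi_{k+1}$ be greedy with respect to $\hat Q^{\pi_k}$, so that $\mathcal{T}^{\pi_{k+1}}\hat Q^{\pi_k}=\mathcal{T}\hat Q^{\pi_k}\succeq\mathcal{T}^{\pi^*}\hat Q^{\pi_k}$. Decomposing $Q^*-Q^{\pi_{k+1}}=\mathcal{T}^{\pi^*}Q^*-\mathcal{T}^{\pi_{k+1}}Q^{\pi_{k+1}}$ through $\mathcal{T}^{\pi^*}\hat Q^{\pi_k}$ and $\mathcal{T}^{\pi_{k+1}}\hat Q^{\pi_k}$, using the fixed-point identities $Q^*=\mathcal{T}^{\pi^*}Q^*$ and $Q^{\pi_{k+1}}=\mathcal{T}^{\pi_{k+1}}Q^{\pi_{k+1}}$, and collecting the resulting $\gamma P^{\pi_{k+1}}(Q^*-Q^{\pi_{k+1}})$ term on the left, one obtains the standard recursion
\[
Q^*-Q^{\pi_{k+1}}\;\preceq\;\big(I-\gamma P^{\pi_{k+1}}\big)^{-1}\gamma\big(P^{\pi^*}-P^{\pi_{k+1}}\big)\big(Q^*-Q^{\pi_k}-\varepsilon_k\big)\,,
\]
where $(I-\gamma P^{\pi_{k+1}})^{-1}$ is a nonnegative operator of total mass $1/(1-\gamma)$ so the inequality is preserved. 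Iterating from $\pi_0=\pi$ (passing to absolute values where needed, so the alternating operator products become single nonnegative operators of controlled mass) bounds $Q^*-Q^{\pi_k}$ above by $\gamma^k$ times the initial loss transported by such an operator, plus a nonnegative combination of $\varepsilon_0,\dots,\varepsilon_{k-1}$ transported by products of $(I-\gamma P^{(\cdot)})^{-1}$ and $\gamma P^{(\cdot)}$ operators; the counting of $\gamma$-factors across these products is exactly that of \citet{munos2003error} and sums the error contributions to the constant $2\gamma/(1-\gamma)^2$.

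\emph{Step 3 (norm and change of measure).} Applying $\|\cdot\|_{d_\mu}$ and the triangle inequality, the transient term contributes $\gamma^k$ times a constant $C_0$ depending only on $Q^*-Q^{\pi_0}$ and vanishes as $k\to\infty$. For each error term, the auxiliary distributions $d_{\mu_k}$ of Definition~\ref{def:munos_dist} are constructed precisely so that $d_\mu$, pushed backwards through the finite product of transition kernels that multiplies $\varepsilon_j$, is dominated by $d_{\mu_j}$ with a uniformly bounded density; Jensen's inequality (convexity of $t\mapsto t^2$) then bounds that term by the corresponding $\|\varepsilon_j\|_{d_{\mu_j}}\le\epsilon$. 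Summing over $j$ and combining with the bookkeeping of Step~2 yields $\|Q^*-Q^{\pi_k}\|_{d_\mu}\le\gamma^k C_0+2\gamma\epsilon/(1-\gamma)^2$, and passing to $\limsup_k$ gives the theorem.

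The main obstacle is the interplay of Steps~2 and~3: one must verify that the backward images of $d_\mu$ under every mixed product of $P^{\pi^*}$ and the $P^{\pi_j}$ arising in the unrolled recursion are dominated by the $d_{\mu_k}$ of Definition~\ref{def:munos_dist} with controlled Radon--Nikodym derivatives (a concentrability-type condition, here folded into the definition of the auxiliary distributions), and that the $\gamma$-counting in the iterated recursion never produces a constant worse than $2\gamma/(1-\gamma)^2$. Step~1 and the single-step algebra of Step~2 are routine given the monotonicity and $\gamma$-contraction of the Bellman operators recalled in Section~\ref{sec:background}.
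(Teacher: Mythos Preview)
Your proposal is correct in spirit but takes a different route from the paper. You re-derive Munos's $L_2$ error-propagation argument from scratch, now written directly for action-value functions: establish the componentwise loss recursion on $Q^*-Q^{\pi_{k+1}}$, unroll it, then pass to the $d_\mu$-norm using the auxiliary distributions of Definition~\ref{def:munos_dist} to control the backward measure changes. The recursion you state in Step~2 is the right one (it follows from greediness of $\pi_{k+1}$ exactly as you indicate), and the $\gamma$-bookkeeping and concentrability issues you flag are precisely those already handled in \citet{munos2003error}, so the argument closes.

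The paper instead treats Munos's $V$-function theorem as a black box and supplies only two short bridging inequalities: (i) the $V$-evaluation error is dominated by the $Q$-evaluation error, via $\|V_k-V^{\pi_k}\|_\mu^2=\|\mathbb{E}_{\pi_k}(Q_k-Q^{\pi_k})\|_\mu^2\le\|Q_k-Q^{\pi_k}\|_{\mu\cdot\pi_k}^2$ (Jensen), so the hypothesis on $\hat\valuepath^\pi$ delivers the per-iteration $V$-error Munos needs; and (ii) the $Q$-suboptimality is dominated by the $V$-suboptimality one step later, via $Q^*-Q^{\pi_k}=\gamma P(V^*-V^{\pi_k})$ hence $\|Q^*-Q^{\pi_k}\|_{\mu\cdot\pi_k}^2\le\gamma^2\|V^*-V^{\pi_k}\|_{\mu\cdot\pi_k P}^2$. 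This buys brevity --- essentially a three-line proof --- at the cost of leaning on the cited result; your approach is longer but self-contained and makes explicit where the auxiliary distributions $d_{\mu_k}$ enter, which is arguably more informative given that Definition~\ref{def:munos_dist} is the crux of the hypothesis.
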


The proofs of our theoretical results can be found in the Appendix. Theorem~\ref{thm:lspi} extends \citeauthor{munos2003error}' result to action-value functions, and indicates how long-term performance of approximate policy iteration is affected by the representation's ability to approximate functions in the value-improvement path. Similar results can be derived for value iteration~\citep{bertsekas1996neuro}.

\subsection{Auxiliary tasks and the value-improvement path}

The representation learning problem can potentially be approached in different ways. However, in this section we focus on a method that is commonly used in practice which can be interpreted as a way of addressing the problem as formulated above: auxiliary tasks. Specifically, we will analyze how well the representation induced by different auxiliary tasks span the value-improvement path. 
We begin with the standard no-auxiliary-task setting and consider methods progressively more aligned with approximating the value-improvement path. As we proceed it will be useful to consider, for each auxiliary task, what subspace the induced representation attempts to capture, and how this compares with that of the value-improvement path. To this end, Figure~\ref{fig:valuepath} gives an illustration of each method.

\textbf{No auxiliary tasks (value-only).\quad}When the training objective is the accurate evaluation of a fixed policy $\pi$, generalization to other policies, improvements or otherwise, can be very poor. Mathematically, under this objective, the aim is to find a representation $\phi^\pi$ and weights $(\theta^\pi_a | a \in \actionspace)$ which obtain a low value for the objective
\begin{align}\label{eq:noAuxTaskObjective}
   \sum_{(x, a) \sim \statespace\times\actionspace} d_\mu(x, a) \left( \phi^\pi(x)^\top \theta^\pi_a \rangle - Q^\pi(x, a) \right)^2 \, ,
\end{align}
where $d_{\mu}$ is the distribution over training state-action pairs. 
If $K \geq |\actionspace|$, it is possible to achieve \emph{zero} error on this objective, by ensuring that the subspace spanned by the coordinates of $\phi$, written $\langle \phi \rangle$, contains each of the action-value functions $Q^\pi(\cdot, a)$ for $a \in \actionspace$. To understand how such a representation generalizes to the approximation of other value functions, let $Q$ be a new value function---corresponding to an improved policy, for example. The optimal approximation to $Q$ using the representation $\phi^\pi$ is given by $\Pi_{\langle \phi^\pi \rangle} Q$. Clearly, any representation $\phi^\pi$ which achieves zero error on the objective \eqref{eq:noAuxTaskObjective} will achieve at least as good an approximation performance of $Q$ as the representation comprising the features $\{Q^\pi(\cdot, a) | a \in \actionspace\}$, but no further guarantees can be given. Thus, the following inequality is tight:
\begin{align*}
    \| \Pi_{\langle \phi^\pi \rangle} Q - Q \| \leq \| \Pi_{\langle Q^{\pi}(\cdot, a) | a \in \actionspace \rangle} Q - Q \| \, .
\end{align*}
\textbf{Cumulant value functions.\quad}Alternatively, we may take on the perspective of  \citet{sutton2011horde} and construct auxiliary tasks by learning optimal value-functions of a diverse collection of cumulants. Let $\{c_1, \ldots, c_n \}$ be a set of $n$ cumulant functions, and for $i = 1, \ldots, n$, let $\hat\pi_i$ be a policy maximizing the value for cumulant $c_i$, and $Q_i^{\hat \pi_i}$ its action-value function under $c_i$. We can think of these auxiliary tasks as capturing a subspace of the space of optimal value functions induced by all possible cumulants: $\langle (I - \gamma P^{\pi^*_c})^{-1} c | \forall c \rangle$. Given infinite representational capacity, adding additional auxiliary tasks would monotonically improve the generalization error. However, when $|\phi|$ is finite, adding auxiliary tasks necessitates a trade-off between approximation errors. In this case, there is no reason to expect the space of all optimal value functions to be well-aligned with the value-improvement path, except for the final point at the optimal policy. For this reason we cannot provide much in the way of generalization guarantees for this auxiliary task. Nonetheless, these can still provide regularization and decorrelation benefits. For example, the UNREAL agent's \textit{pixel control} auxiliary loss is of this type \citep{jaderberg2016reinforcement}.

\textbf{Cumulant policies.\quad}Suppose instead we learn $Q^{\hat\pi_i}_i$ using a separate approximator and use $Q^{\hat \pi_i}$ as our auxiliary task ({\sl i.e.} the evaluation of the auxiliary policy $\hat \pi_i$ on the \emph{true} reward function). This would make sure that the value functions used as auxiliary tasks belong to \valuespace. Clearly, the resulting method will be sensitive to the distribution of policies generated by the cumulant functions.  However, assuming the cumulant functions are sufficiently expressive, the resulting set of auxiliary policies will eventually cover all deterministic policies. Thus, the representation captures the principal components of the value polytope itself~\citep{dadashi2019value}. Because the value-improvement path is a subset of the value polytope, this immediately allows us to apply Theorem~\ref{thm:lspi}. The \textit{adversarial value functions} method provides a principled approach to solving the problem of generating a set of such policies that span the value polytope, but exact solutions can be intractable \citep{bellemare2019geometric}.

The value-improvement path is in general much smaller than the value polytope itself (see Property~\ref{it:size}). Consider for example that for any policy $\pi$ all policies $\hat \pi_i$ with $Q^{\hat\pi_i} \prec Q^{\pi}$ are not in the path $\valuepath^\pi$. The value functions produced by the above two methods may be entirely unrelated to those in the value-improvement path.

\textbf{Past policies in the value path.\quad}Perhaps we can improve generalization error by explicitly restricting the auxiliary value functions to the elements of the value-improvement path. The future policies, and their value functions, are unknown, but we have already passed through some sequence of value-policy pairs during training. Rather than using arbitrary cumulants to generate policies, we can take advantage of the trajectory of improving policies itself to source these auxiliary values. This method involves taking as the auxiliary tasks the value function for the past $k$ policies in our trajectory of policy improvement, $(Q^{\pi_{t-k}}, \ldots, Q^{\pi_{t-1}})$.
\begin{figure*}[t]
    \centering
    \includegraphics[keepaspectratio,width=.9\textwidth]{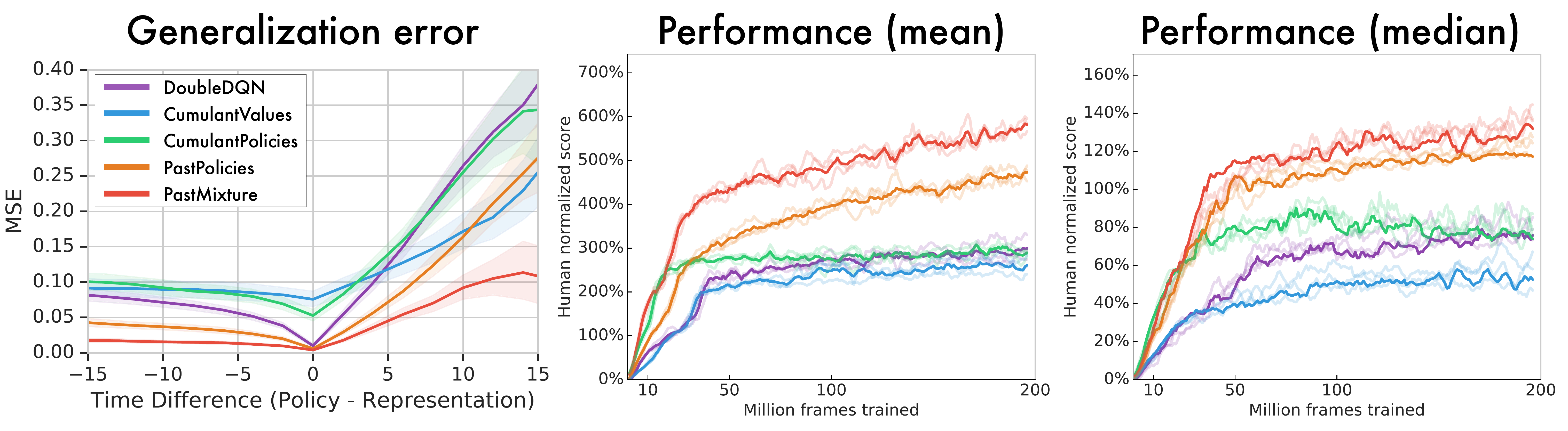}
    \caption{{\bf (Left)} \textit{Generalization:} Mean-squared error between representations $\phi_t$ and value functions $Q_k$ for pairs $t,k$ drawn throughout training. {\bf (Right)} \textit{Performance:} Human-normalized performance on Atari-57. Results averaged over three seeds.}\label{fig:atari_results}
\end{figure*}
We can also consider a softened version of the PastPolicies where each auxiliary task estimates a different mixture over past value-functions. This can be implemented by simply using a different learning rate for each auxiliary task and using their average as the bootstrap target for all tasks. These two auxiliary tasks, PastPolicies and PastMixture, have not been previously proposed, but viewed in the context of the value-improvement path we may expect them to perform well.

The PastPolicies and PastMixtures approaches should be expected to generalize well backwards toward previous policies in the trajectory, as these are what they are being trained on; however, generalization to future values will depend on the exact shape of the value-improvement path.
Interestingly, there is a close connection that can be drawn to optimistic approaches to regret minimization with predictable sequences \citep{rakhlin2013optimization,kalai2005efficient}. At an abstract level, representation learning in the context of value paths can be cast as an online learning problem: at each time step, we must select a collection of features, or equivalently, a subspace of $\mathbb{R}^{\mathcal{X}\times\mathcal{A}}$, and a Q-function is revealed to us, with our loss depending on how well the Q-function is approximable with our chosen subspace. The hypothesis surrounding PastPolicies and PastMixtures is that this problem is not entirely adversarial, and some useful information for future predictions is contained within previous losses; in fact, the PastPolicies approach precisely embodies the \emph{follow-the-leader} approach to this online learning problem.

\subsection{Empirical analysis}\label{sec:empirical}

Our goal in this section is to empirically study the effect of the previously discussed auxiliary tasks on the quality of the learned representation. Our hypothesis is that methods with representations that generalize well along the value-improvement path lead to better long-term control performance. Whereas Theorem~\ref{thm:lspi} proves a more rigorous version of this hypothesis, in this section we test our hypothesis empirically, using a novel evaluation of an agent's representations in terms of the generalization error to past and future value functions in a value path. 


For these experiments, we use the Atari-57 benchmark from the Arcade Learning Environment \citep[ALE]{bellemare13arcade}. We use Double DQN \citep{vanhasselt16deep} as our baseline  non-auxiliary algorithm, and compare with each of the auxiliary tasks: CumulantValues, CumulantPolicies, PastPolicies, and PastMixtures. 
We generated the cumulants for CumulantValues and CumulantPolicies using a random network (details in Appendix~\ref{sec:exp_details}).
Each auxiliary task is trained as a linear function of the last hidden layer of the neural network used by Double DQN, thereby shaping the representation in different ways. To test our hypothesis that tasks aligned with the value-improvement path will lead to improved long-term performance, we first explicitly evaluate this alignment (in hindsight) by measuring approximation error between a representation at one point in time and the value functions along the value path. 

Specifically, we carried out the experiment as follows. While training each agent, for $200$ million environment frames, we saved the current network every $2$ million frames. After training, we evaluated how well, in terms of mean-squared error, each \textit{representation} was able to linearly fit each \textit{value function}. Specifically, we assess how well the representation at time $t$, $\phi_t(s)$, can linearly approximate the value functions $\hat{Q}_k$ for $k = t-15, t-14, ..., t+15$ (Appendix~\ref{sec:exp_details} for details).

Figure~\ref{fig:atari_results} (left) shows a comparison of the generalization errors for each agent on a held-out set of transitions. Each curve can be interpreted as showing the generalization of a representation to other value functions in the value path: negative values correspond to past value functions and positive values to future value functions. Note that in RL we are generally interested in minimizing the latter. Figure~\ref{fig:atari_results} (center, right) shows the human-normalized mean and median scores on Atari-57 (additional results in Appendix~\ref{sec:correlation}).

These results clearly show two trends. First, the methods' ability to generalize to future value functions largely reflects what our analysis based on the value-improvement path would predict. Second, and perhaps more important, the generalization error for future value functions is remarkably, although not perfectly, predictive of long-term performance, corroborating the main argument of this paper that the value-improvement path is the space an RL agent should generalize over. Note that the best performing sets of auxiliary sets, PastPolicies and PastMixtures, are actually novel. It should be straightforward to combine these auxiliary tasks with most value-based algorithms in the literature.

Note that CumulantValue obtains worse long-term performance than other auxiliary tasks. This could (incorrectly) lead us to conclude that auxiliary tasks should only be defined in terms of the MDP's actual reward. Although this is consistent with the main argument of this paper, we believe there are situations in which having other forms of auxiliary tasks may be beneficial. A closer inspection of the results reveals that in some games the CumulantValue task actually performs \emph{best}---notably on those games where exploration is particularly difficult (Appendix Figure~\ref{fig:full}). We speculate that auxiliary tasks based on the value of cumulants may be most useful when the actual reward is sparse. An interesting direction for future work would be to provide a formal justification for cumulant-based auxiliary tasks analogous to the ones provided here for their reward-based counterparts. 

As a final demonstration, we combine PastMixtures with the state-of-the-art Rainbow agent \citep{hessel2018rainbow}. Figure~\ref{fig:rainbow_pm} (Appendix) shows that, despite an existing auxiliary task effect from distributional RL, PastMixtures leads to improved performance of this state-of-the-art agent.

\section{Discussion and related work}

Auxiliary tasks were introduced with \citeauthor{sutton2011horde}'s (\citeyear{sutton2011horde}) Horde architecture, though at the time they were not explicitly aimed at improving an agent's representation. Later, the UNREAL agent introduced an auxiliary task, \textit{pixel control}, to an A3C-like agent, and showed that these significantly improved performance~\citep{jaderberg2016reinforcement}. However, the trade-offs between the auxiliary losses and primary loss required close tuning of hyper-parameters. \citet{fedus2019hyperbolic} proposed learning the distribution of returns for multiple discount factors. Analysis of the relationship between these functions and the value-improvement path is an interesting direction for future work.

\citet{bellemare2019geometric} recently argued for a geometric approach to the representation learning problem based upon the insights surrounding the \textit{value polytope} \citep{dadashi2019value}. Their proposed representational loss takes the form of a minimization of the maximum projection error against a finite set of \textit{adversarial value functions}.

In this paper we focused on how the concept of value-improvement path can be leveraged for representation learning through the use of auxiliary tasks. One can take the ideas presented one step further and treat the entire value-improvement path as a stationary object that can be approximated as a single function~\citep{schaul2015universal,borsa2019universal}. This opens up interesting possibilities in terms of how to adjust such a function: since in general every sample transition can be linked to a specific policy $\pi$ in the value-improvement path, one could think of a training regime that is exclusively ``on-policy''~\citep{sutton2018reinforcement}.

There is also an intriguing connection between the value-improvement path and distributional RL. As discussed, one way to implement the PastMixtures auxiliary tasks is to adopt a different learning rate for each auxiliary task and use their average as the bootstrap target for all tasks. Interestingly, this can be seen as performing an update similar to quantile regression distributional RL, but in which the asymmetric weights are replaced with symmetric weights~\citep{dabney2018implicit,dabney2018distributional}. This connection suggests an explanation as to why distributional RL is so effective in shaping the representation---a question still open in the literature. We discuss this subject in more detail in Appendix~\ref{sec:distributional}.

\section{Conclusions}

In this paper we discussed the problem of representation learning in the context of RL. We argued that one should address this problem keeping in mind that learning a representation for RL is considerably different from the corresponding problem in supervised learning. As a consequence, the common practice of treating each policy evaluation as a conventional supervised learning problem may lead to an over-specialization to intermediate target value functions whose interest for RL is only transient. This is analogous to the problem of overfitting in supervised learning. Under this premise, we presented the following contributions:

\begin{enumerate}
    \item A new formulation of representation learning in RL in terms of an algorithm's value-improvement path.
    \item An interpretation of the commonly-used practice of using auxiliary tasks as a way of addressing the representation learning problem under our new formulation. Specifically, we analyzed how well auxiliary tasks used in the literature, and also new ones, span the value-improvement path. 
    \item Two novel auxiliary tasks inspired by the concept of value-improvement path, PastPolicies and PastMixtures, that showed strong performance and can be readily combined with value-based agents in the literature.
    \item A novel study investigating the effect of auxiliary tasks on the quality of the representation learned. This study is based on an original way of assessing the generalization ability of an approximator that estimates how much it spans a value-improvement path by looking at how well it can represent past and future value functions.  
\end{enumerate}

We believe the insights above shed light on the representation learning problem in the context of RL, allowing a better understanding of practices already used and potentially serving as an inspiration for the design of new  methods.

\clearpage

\subsubsection*{Acknowledgments}
The authors wish to thank colleagues at DeepMind and Google for their encouragement and feedback throughout the process of working on this paper. In particular, to Georg Ostrovski and Doina Precup for discussions and thoughts on a previous draft. As well, we thank the anonymous reviewers for their constructive feedback.


\bibliographystyle{icml2020}
\bibliography{main}

\clearpage

\appendix

\section*{Appendices}

\begin{figure}
    \centering
    \includegraphics[width=.2\textwidth]{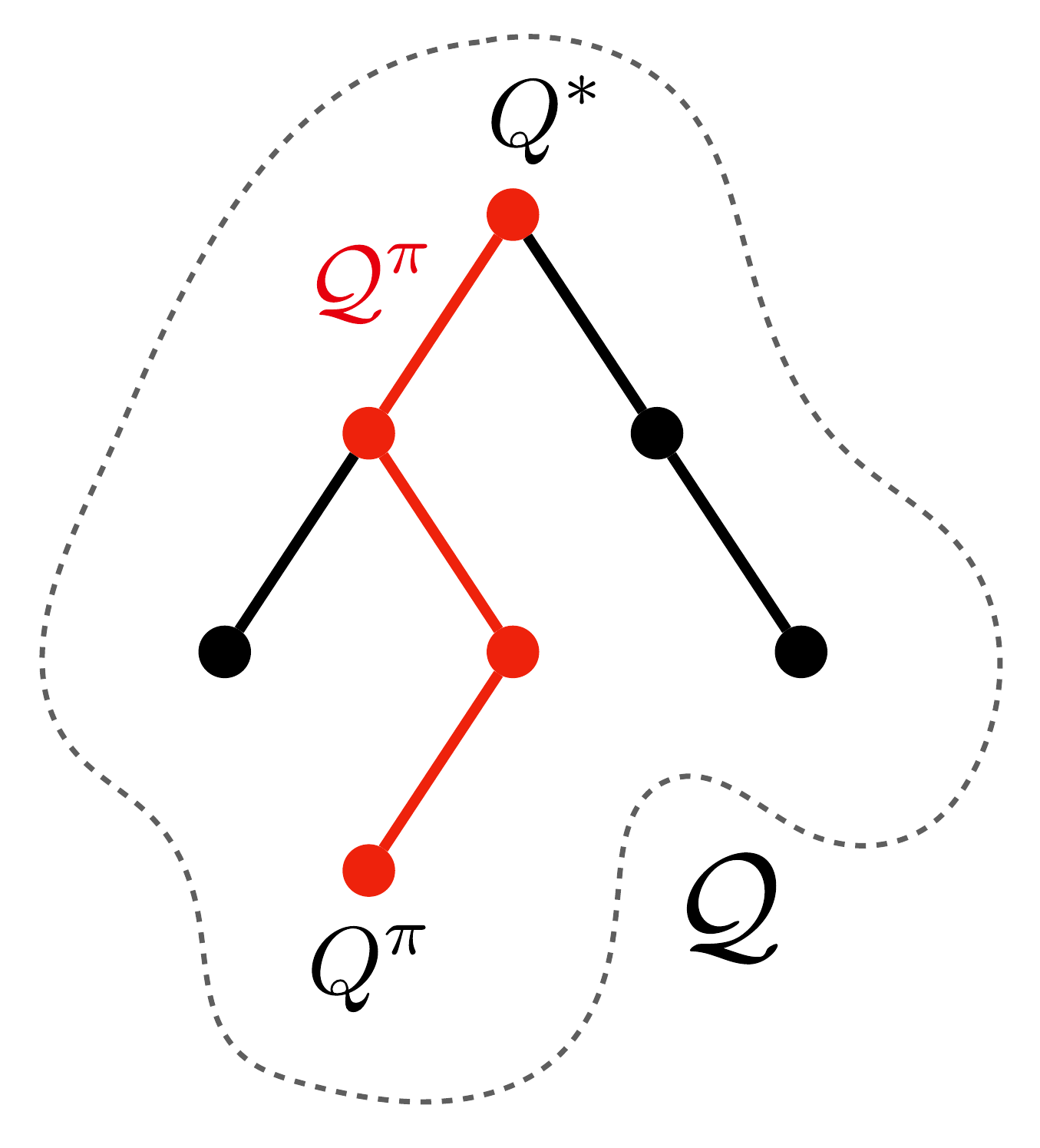}
    \caption{Schematic representation of policy iteration's value-improvement path $\valuepath^\pi$. Each point in the diagram represents a value function $Q^\pi \in \valuespace$. Note any time two value-improvement paths intersect, they merge. \label{fig:tree}}
\end{figure}

\section{Proofs}
\label{sec:proofs}

\begin{definition}\label{def:munos_dist}
    For a distribution $d_\mu$, optimal policy $\pi^*$ and sequence of policies due to policy iteration $\pi_k$, define the stochastic matrices \citep{munos2003error},
    \begin{align*}
        Q_k &= \frac{(1 - \gamma)^2}{2} (I - \gamma P^{\pi^*})^{-1} \times \\
        &\quad \left[ P^{\pi_{k+1}} (I - \gamma P^{\pi_{k+1}})^{-1} + P^{\pi^*} (I - \gamma P^{\pi_k})^{-1}\right],\\
        \tilde Q_k &= \frac{(1 - \gamma)^2}{2} (I - \gamma P^{\pi^*})^{-1} \times \\
        &\quad \left[ P^{\pi_{k+1}} (I - \gamma P^{\pi_{k+1}})^{-1}(I + \gamma P^{\pi_k})  + P^{\pi^*} \right].
    \end{align*}
    Then, the \textit{related distributions} are given by $d_{\mu_k} := d_\mu Q_k$.
\end{definition}

{\bf Proof of Theorem~\ref{thm:lspi}}
\begin{proof}
Our proof extends the results of \citet{munos2003error} (Theorem 1) from state-value functions to action-value functions. First, we bound the norm of the value function approximation error at iteration $k$,
\begin{equation*}
\| V_k - V^{\pi_k} \|^2_{\mu} = \| \mathbb{E}_{\pi_k} Q_k - \mathbb{E}_{\pi_k} Q^{\pi_k} \|^2_{\mu} \le \| Q_k - Q^{\pi_k} \|^2_{\mu \cdot \pi_k}.
\end{equation*}
Thus, bounding the per-iteration approximation error of the action-value function similarly bounds the value function approximation error. 
Second, we lower-bound the sub-optimality of the policy at iteration $k$,
\begin{align*}
\displaystyle \| Q^* - Q^{\pi_k} \|^2_{\mu \cdot \pi_k} &=  \\
\quad \expect_{(x,a) \sim \mu \cdot \pi_k}& (\gamma \expect_{x' \sim P(\cdot \mid x,a)} \left[ V^*(x') - V^{\pi_k}(x') \right])^2,\\
&\le \gamma^2 \| V^* - V^{\pi_k} \|^2_{\mu \cdot \pi_k P}.
\end{align*}
Combining these two with \citeauthor{munos2003error}'s results for value functions yields our result.
\end{proof}

\section{Additional Empirical Results}

\begin{figure*}[t]
    \centering
    \includegraphics[width=\textwidth]{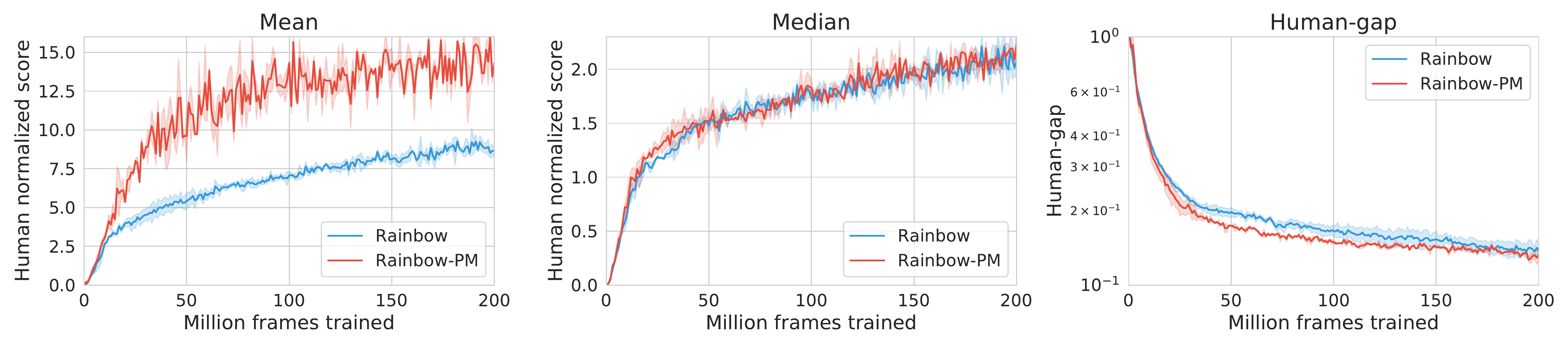}
    \caption{Comparison between Rainbow and Rainbow-PM (with the PastMixtures auxiliary task added). This shows the (\textbf{Left}) mean and (\textbf{Center}) median human normalized scores over Atari-57 benchmark, as well as (\textbf{Right}) the human-gap. }\label{fig:rainbow_pm}
\end{figure*}

In later sections of this appendix we argue that the benefits of distributional RL can be partially understood through their alignment with the value-improvement path. Because of this, we might expect that combining an agent such as Rainbow \cite{hessel2018rainbow}, would not lead to any real improvement in performance. However, this is only a hypothesis, and we might find that the is still some benefit to combining different auxiliary tasks. 

To this end, here we include a final experimental result in which we apply the PastMixtures auxiliary task on top of the Rainbow agent. Figure~\ref{fig:rainbow_pm} shows this comparison between Rainbow and Rainbow-PM, which adds the PastMixtures auxiliary task. We can see that adding this auxiliary task neither helps, nor hurts, the median human-normalized score. However, both the mean human-normalized score and the human-gap are improved. We suggest interpreting these results as showing that distributional RL and PastMixtures capture related information about the state, but due to doing this in different ways can still have some benefit in their combination.

Finally, we would emphasize that experiments such as this on agents with many components (e.g. Rainbow) can make it harder to interpret the empirical results. Specifically, only a full ablation of all components of Rainbow and their combination with PastMixtures would be able to reliably establish where performance benefits are coming from and which components seem to provide overlapping benefits. Thus, we would suggest that the experiments in the main text are the more informative, despite being built upon a non-state-of-the-art agent (DoubleDQN).

\section{Experimental details}
\label{sec:exp_details}

\paragraph{Hyper-parameter sweep}
As a prelude to our primary empirical study we performed a comparable hyper-parameter sweep on all algorithms on a set of six games (Assault, Asterix, Frostbite, Gopher, Seaquest, MsPacman), and used the best performing hyper-parameters for each algorithm when running on the full Atari-57 suite.

For all methods we swept over learning rates $(0.00025, 0.0001, 0.00005, 0.00001)$ and number of auxiliary tasks $(4, 16, 64, 128)$. For PastPolicies, due to memory constraints we instead considered the range $(2, 4, 8, 16)$ for number of tasks. For cumulant-values and cumulant-policies we swept (jointly with other parameters) over the type of cumulant functions generated (i.e. using the output of a random network passed through either a $\mathrm{tanh}$ or $\mathrm{sign}$ function).

\paragraph{Generalization error analysis}
For each agent we performed the generalization analysis on one seed (randomly chosen) and all 57 games. The results in Figure~\ref{fig:atari_results} are averages over all games for each method, with mean-squared errors first normalized to be between $[0, 1]$ using the full range of errors generated by all methods \textit{on that game}. This allows us to show an aggregated plot, despite different games yielding significantly different errors. Note that this means results are averaged over $57$ independent runs, each with up to $100$ saved networks, per algorithm.

The horizontal axis gives the time difference, $k - t$, between the policy whose value function, $Q^{\pi_k}$, provides the prediction target, and the representation, $\phi_t$, upon which we perform linear regression to predict that target. The procedure is as follows: to generate the data we run the $\epsilon$-greedy policy with respect to $Q^{\pi_k}$ for $50000$ frames, with $\epsilon = 0.005$, storing frames and value function outputs of the network. This results in the training examples $\{(s_i, a_i), Q_k(s_i,a_i)\}$ for $k=t-15, t-14, ..., t+15$. We then divide this data set randomly into a training set and testing set with a $90\%/10\%$ split, perform linear regression on the training set, $\min_{w_a} \sum_i \left(\phi_t(s_i)^\top w_a - Q_k(s_i, a_i) \right)^2$ for $k=t-15, t-14, ..., t+15$ and $a \in \actionspace$, and report mean-squared error on the test set. 

\paragraph{Additional details} Other training details match those of \citet{vanhasselt16deep}. In particular, note that we use clipped rewards, end-of-episode life-loss signal, and the standard $\gamma = 0.99$. Experimental results were averages over $3$ seeds.

We use the standard definition for human-normalized score,
$$score = \frac{\text{agent} - \text{random}}{\text{human} - \text{random}},$$
where agent, random, and human each denote the per-game scores for the agent, a random policy, and a human player respectively. The \emph{human-gap} is defined as the average, over games, performance gap from human-level,
$$human\_gap = 1.0 - \mathbb{E}[\min(1.0, score)].$$

\paragraph{Computational resources}
All experiments shown in the main text were implemented using Python and Tensorflow, and were run using a single V100 GPU per 
Rainbow-based agents were implemented in Python using JAX, with each configuration (game, algorithm, hyper-parameter setting) run on a single V100 GPU. Such experiments generally required less than a week of wall-clock time.
\section{Auxiliary tasks}\label{sec:auxtasks}

In the main text we give brief descriptions of each of the auxiliary tasks used in our experiments. Here, we attempt to give a more detailed and reproducible account of how each of these was implemented. For all the auxiliary tasks the additional value functions are parameterized as additional linear heads off of the same shared hidden layer as the primary value function. As these value function heads are part of the main network they use target networks in the same way as is standard with DQN agents. We will use the variable $n$ to denote the number of such additional action-value heads.

\paragraph{Cumulant Value}
The \textit{CumulantValue} task involves learning action-value functions of other reward functions (cumulants). There are two components to this task: (1) the algorithm used to estimate the action-value function, and (2) how the cumulants are produced. For the first, we use the same Double Q-learning update as used for the primary value function. For the second, we use the same neural network as for a standard DQN, but with output dimension of $n$ instead of $|\actionspace|$. For an input state, consisting of stacked frames, $x$ we denote the output of this randomly initialized network as $f_i(x) \in \mathbb{R}$. Given an observed transition from state $x_{t}$ to next state $x_{t+1}$, we define the corresponding cumulant as
\begin{equation*}
    c_i(x_t, x_{t+1}) := \text{tanh}(s \times [f_i(x_{t+1}) - f_i(x_t)]),
\end{equation*}
where $s = 100$ was tuned by hand to produce cumulants across the full range $(-1, 1)$. However, we note that this same end could be achieved by changing the initialization of the network weights. Thus, the auxiliary value functions here are best viewed as estimating the optimal action-value function for their corresponding cumulants, $Q_i^{\hat\pi_i}$. Although these cumulants have the same range of values as the primary (clipped) reward function, they tend to be significantly less sparse. Finally, note that we stop gradients from flowing into the cumulant network, so they are fixed at their randomly initialized values.

Notice that, in general, the values learned by CumulantValue can be significantly different than those of the primary value function. This will force the learned representation to capture information largely unrelated to the main task, which could be viewed as \emph{biasing} the representations learned. On the other hand, as we have seen, in the case of sparse reward tasks, this biased representation can sometimes be beneficial. Additionally, we expect that expanding the network capacity would diminish to negative effects of this misalignment.

\paragraph{Cumulant Policy}
The \textit{CumulantPolicy} task builds upon the CumulantValue task but uses an entirely separate (third) network to estimate the cumulant-specific action-value functions. These action-value functions are used to define a cumulant policy $\hat\pi_i$, which is the greedy policy on the corresponding cumulant value functions. The auxiliary value heads on the main network are trained to \textit{evaluate} these cumulant policies with respect to the primary reward function. That is, the policy $\hat\pi_i$ being targeted is the same as in the CumulantValue task, but the reward function is the true reward instead of the cumulant for which the policy was optimized. We denote these auxiliary action-value functions with $Q^{\hat\pi_i}$.

\paragraph{Past Policies}
In the \textit{PastPolicies} task the auxiliary action-value function heads are trained to evaluate different policies under the true reward function. Here, these auxiliary policies, $\hat\pi_i$, are the greedy policies for the previous target networks of the primary action-value function. Specifically, let $\theta_m$ and $\bar \theta_m = \theta_{m-1}$ denote the network parameters and those of the target network respectively after $m-1$ target network updates. Then, the $n$ auxiliary policies are defined as $\hat\pi_i(x) := \argmax_{a \in \actionspace} Q(x, a; \bar \theta_{m - i})$. Thus, the target policy for auxiliary head $i$ comes from a network that is $i$ target updates old, and thus forms a sliding window over recent past policies of the main action-value function. This is implemented as a sliding window where the $i\text{th}$ action-value head always estimates the value for policy $\hat\pi_i$, corresponding to the $i\text{th}$ most recent target network. This requires maintaining multiple copies of the network parameters, which increases memory usage linearly in the number of past policies considered, and similarly increases the computational cost for each batch update as each past target network must be evaluated.

\paragraph{Past Mixture}
Although similarly motivated, the \textit{PastMixture} task takes quite a different approach. In this case, each auxiliary action-value head tracks the primary reward and policy, but with a different learning rate. That is, each auxiliary head takes as its target the Q-learning target with the bootstrap term composed of the average of all auxiliary head predictions, but the loss for each head is weighted differently ($\alpha_i = (i + 1) / (n + 1)$). This causes each auxiliary head to track changes in value faster or slower than each other, providing different mixtures between the current and past value estimates. All together, PastMixtures can be seen as minimizing the following weighted least-squares problem, with the squared-loss replaced with a Huber-loss in the deep RL setting,

\begin{equation*}
    \sum_i \alpha_i \sum_j (r_t + \gamma \max_a Q_j(x_{t+1}, a, \bar \theta) - \hat Q_i(x_t, a_t, \theta))^2,
\end{equation*}
where $\theta$ and $\bar\theta$ denote the online and target network parameters respectively. For the results in Figure~\ref{fig:rainbow_pm}, the temporal-difference loss above is replaced with the categorical distributional RL loss, but remains a weighted loss over the multiple heads.

Unlike for PastPolicies, PastMixture does not require maintaining additional sets of network parameters. The memory and computational costs for PastMixture very similar to methods such as QR-DQN \citep{dabney2018distributional}.

\section{Correlation analysis}
\label{sec:correlation}

We attempted to further evaluate our claims that generalization error on the value-improvement path is predictive of future performance. To this end, we computed, for each representation and averaged over games, the average future performance of policies between $5$ and $30$ million frames in the future. We also computed the mean-squared error for that representation linearly predicting the future value functions in the same window. In Figure~\ref{fig:correlations} we show a scatter plot of these results and in Table~\ref{fig:correlation_table} we give the Pearson correlation coefficients and resulting $p$-values. 

\begin{figure}[t]
    \centering
    \includegraphics[width=.5\textwidth]{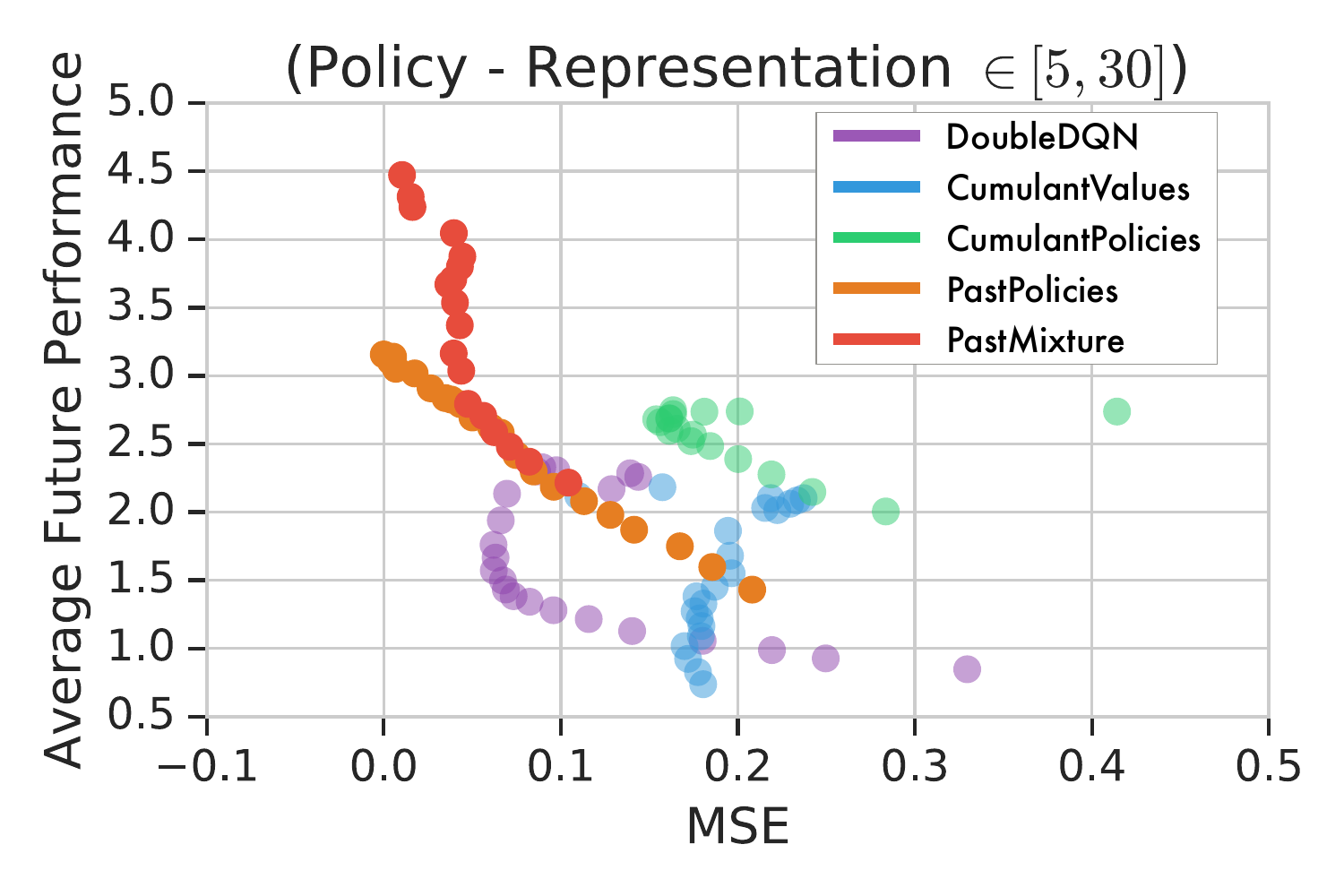}
    \caption{Scatter plot comparing, relative to a feature representation and averaged over all 57 games, an algorithm's future performance (averaged over policies $[5, 30]$ million frames in the future) against the generalization MSE on the same time period. Colors indicate algorithm consistent with the main text. We emphasize, with opacity, the algorithms which are \textit{consistent} with the value-improvement path arguments we introduced.}\label{fig:correlations}
\end{figure}

\begin{table}
    \centering
    \begin{tabular}{lll}
    Algorithm & Pearson coefficient & p-value \\
    \hline
    \hline
    Double DQN          &     -0.518        &    0.014     \\
    Cumulant Values       &     0.370        &    0.091     \\
    Cumulant Policies          &    -0.287         &   0.248 \\
    \hline
    Past Policies         &    -0.995         &   2.38e-21 \\
    Past Mixture          &   -0.895         &   5.50e-7 \\
    \end{tabular}
    \caption{Correlation between generalization errors for future values in the value-improvement path and future performance.}
    \label{fig:correlation_table}
\end{table}

Notice the particularly high degree, and significance, of correlation between generalization error and future performance for the two auxiliary tasks that are conceptually best aligned with the value-improvement path (Past Policies and Past Mixture).

There are limitations of this analysis, primarily it only shows correlation, and so cannot establish a clear causal relationship. That is, both are strongly affected by the current quality of the agent's representation ({\sl i.e.}, better performing agents also have better representations, and stronger future performance). However, we believe it provides additional insight into the nature of the representation learning problem in RL.

\section{Understanding distributional RL through the concept of value-improvement path}
\label{sec:distributional}

In \emph{distributional reinforcement learning} (DRL), the aim is to learn the full distribution of the random returns $Z^\pi(x, a) = \sum_{t \geq 0} \gamma^t R_t | X_0 = x, A_0 = a$ for each $(x, a) \in \statespace \times \actionspace$, rather than just their expected value. Typically, practical approaches to distributional RL involve learning parametrized approximations to these distributions, including categorical distributions with fixed support \citep{bellemare2017distributional} and discrete distributions with varying support \citep{dabney2018distributional}. Recently, it has been shown these approaches can be interpreted as learning a finite collection of auxiliary statistics of the return distribution \citep{rowland2019statistics}, and thus these methods fit naturally in the framework of auxiliary tasks. This interpretation may help to explain their significantly improved empirical performance \citep{hessel2018rainbow,barth-maron2018distributional,dabney2018implicit}.

Distributional RL may confer some of the same benefits as the previous auxiliary task (see Proposition~\ref{prop:mix}). However, viewed in the context of the value-improvement path, distributional RL provides additional avenues for analysis and understanding.

An interesting interpretation of some forms of distributional RL is that they are a natural way of decomposing the representation problem into simpler sub-problems, leading to a sort of \emph{divide-and-conquer} strategy. We will use quantiles to illustrate this point, though the intuition may apply to other forms of distributional RL as well. Recall that a quantile function is defined as $Z^\pi_\tau \defi F^{-1}_\pi(\tau)$, where $\tau \in [0, 1]$ and $F_\pi$ is the cumulative distribution function over possible returns under policy $\pi$. From this definition it should be clear that, given an MDP, all stochastic policies $\pi$ share the same extreme quantile functions $Z^\pi_{0}$ and $Z^\pi_{1}$.\footnote{\textit{Stochastic policies} with non-zero probability on all actions.} This means that, once we have learned $Z^\pi_{0}$ and $Z^\pi_{1}$ for \emph{any} policy, we can ``re-use'' them in the representation of \emph{all} policies $\pi'$ along the value-improvement path. Now, if we increase $\tau$ from $0$ only slightly (or, equivalently, slightly reduce it from $\tau=1$), we would expect the corresponding $Z^\pi_{\tau}$ not to vary much as a function of $\pi$, so these quantile functions should still be quite transferable from one policy to the other. As $\tau \to 0.5$, we would expect $Z^\pi_{\tau}$ to vary more and more as a function of $\pi$, but even when $\tau=0.5$, it should not vary much more than the mean---that is, the value function itself. 

In summary, the ``divide'' step consists in decomposing the problem of approximating the mean $Q^\pi$ into $n$ problems of estimating the quantiles $Z^\pi_\tau$. The ``conquer'' step, in turn, is to average the solutions of the individual problems, that is, $\hat{Q}^\pi = \tfrac{1}{n} \sum_{i=0}^n \hat{Z}^\pi_{\tau_i}$. If each of the sub-problems is indeed simpler than the original, in the sense that it is possible to leverage more information from past estimates of $Z^\pi_\tau$ to represent the current one, then we should expect the representation learned by distributional RL to be useful along the value-improvement path. In what follows we will make these statements more formal and give some examples of scenarios in which the intuition above holds.  

\begin{restatable}{proposition}{propSmooth}\label{prop:smooth}
For a policy $\pi$, let $Z^\pi_\tau: \statespace \times \actionspace \to \mathbb{R}$ be an auxiliary task trained by asymmetric regression (e.g. quantile regression) at threshold $\tau \in [0, 1]$. Assume that $Z^\pi_\tau$ is bounded, monotonically increasing in $\tau$ and Lipschitz continuous such that $\| Z^\pi_\tau - Z^\pi_{\tau'} \| \le \beta | \tau - \tau' |.$
Then,
\begin{equation*}
    \max_{\pi, \pi'} \| Z^\pi_\tau - Z^{\pi'}_\tau \| \le 2\beta \min\{ \tau, 1 - \tau \},\ \forall \tau \in [0, 1].
\end{equation*}
\end{restatable}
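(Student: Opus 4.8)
The plan is to exploit the observation recorded above---that all fully-stochastic policies share the same extreme quantile functions $Z^\pi_0$ and $Z^\pi_1$---and then to propagate this agreement inward using the Lipschitz hypothesis together with the triangle inequality.

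First I would make precise the claim that $Z^\pi_0$ and $Z^\pi_1$ do not depend on $\pi$ among stochastic policies. At the endpoints $\tau \in \{0,1\}$ the value $Z^\pi_\tau = F^{-1}_\pi(\tau)$ equals the essential infimum and essential supremum of the return $Z^\pi(x,a)$, respectively, and these are finite since the MDP is finite and $\gamma < 1$ (consistent with the stated boundedness of $Z^\pi_\tau$). When $\pi$ assigns positive probability to every action, the set of state-action trajectories of positive probability---and hence the set of attainable return values at each $(x,a)$---depends only on $P$ and $r$, not on the precise action probabilities. Therefore, for any two stochastic policies $\pi, \pi'$ we have $Z^\pi_0 = Z^{\pi'}_0$ and $Z^\pi_1 = Z^{\pi'}_1$, so that $\|Z^\pi_0 - Z^{\pi'}_0\| = \|Z^\pi_1 - Z^{\pi'}_1\| = 0$.

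Next, fixing $\tau \in [0,1]$ and stochastic policies $\pi, \pi'$, I would anchor at $\tau = 0$ and apply the triangle inequality with the Lipschitz bound:
\begin{align*}
\|Z^\pi_\tau - Z^{\pi'}_\tau\|
&\le \|Z^\pi_\tau - Z^\pi_0\| + \|Z^\pi_0 - Z^{\pi'}_0\| + \|Z^{\pi'}_0 - Z^{\pi'}_\tau\| \\
&\le \beta\tau + 0 + \beta\tau = 2\beta\tau .
\end{align*}
Anchoring instead at $\tau = 1$ gives $\|Z^\pi_\tau - Z^{\pi'}_\tau\| \le 2\beta(1-\tau)$ by the identical argument. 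Keeping the smaller of the two bounds yields $\|Z^\pi_\tau - Z^{\pi'}_\tau\| \le 2\beta \min\{\tau, 1-\tau\}$, and since the right-hand side is independent of $\pi, \pi'$ I would conclude by taking the maximum over the pair.

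\textbf{Main obstacle.} The only genuinely delicate step is the first one: arguing rigorously that the extreme quantiles are policy-independent. This needs the boundedness of returns (immediate here from finiteness and $\gamma<1$) and a careful statement that reweighting a full-support policy leaves the support of the return distribution unchanged at each state-action pair. Everything after that is a routine application of the triangle inequality and the Lipschitz estimate; the monotonicity hypothesis on $Z^\pi_\tau$ plays no role in the bound and serves only to make $Z^\pi_\tau$ a bona fide quantile function.
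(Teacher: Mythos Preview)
Your proposal is correct and mirrors the paper's own proof: the paper likewise first notes that for fully-stochastic policies the extreme quantiles agree ($Z^\pi_0 = Z^{\pi'}_0$ and $Z^\pi_1 = Z^{\pi'}_1$), then inserts an endpoint $\tau' \in \{0,1\}$, applies the triangle inequality together with the Lipschitz bound to obtain $2\beta|\tau - \tau'|$, and finally optimizes over the choice of endpoint. Your three-term triangle inequality is just a slightly more explicit rendering of the paper's two-term version (which silently uses $Z^\pi_{\tau'} = Z^{\pi'}_{\tau'}$), and your remark that monotonicity is unused in the bound is accurate.
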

\begin{proof}
Begin by observing that, for this class of statistics of the return distribution, for all pair of policies $\pi, \pi'$
\begin{align*}
    & \pi(x, a) > 0, \pi'(x, a) > 0, \forall a \in \actionspace\ \\
    &\implies\ \quad 
    Z^\pi_{\tau = 0} = Z^{\pi'}_{\tau = 0},\ 
    Z^\pi_{\tau = 1} = Z^{\pi'}_{\tau = 1}.
\end{align*}
Let $\tau^\prime \in \{0,1\}$. Then 
\begin{align*}
    \|Z^\pi_\tau - Z^{\pi'}_\tau \| &= \|Z^\pi_\tau - Z^\pi_{\tau^\prime} + Z^{\pi^\prime}_{\tau^\prime} - Z^{\pi'}_\tau \|,\\
    &\le \|Z^\pi_\tau - Z^\pi_{\tau^\prime} \| + \|Z^{\pi'}_\tau - Z^\pi_{\tau^\prime} \|,\\
    &\le 2\beta | \tau - \tau' |.
\end{align*}
Optimizing over $\tau^\prime \in \{0,1\}$ yields the result.
\end{proof}

In Figure~\ref{fig:qpath3d} we show a concrete example of the consequences of Proposition~\ref{prop:smooth}. The MDP is a 3-state chain, with $\gamma = 0.7$ and zero reward except for the terminal state (where $r = 1$). The actions transition left (right) with probability $0.9$ and transition in a random direction with probability $0.1$. The convex hull of the value polytope is shown by the gray surface. We track policies as they interpolate between two fixed policies, varying the mixture $\alpha \in [0.1, 0.3, 0.5, 0.7, 0.9]$. The value functions for these interpolating policies are shown, by color, with the filled points. Finally, we demonstrate the effects of Proposition~\ref{prop:smooth} by plotting, as a color-matched line, the spectrum of quantile functions for each policy. Observe, consistent with our result, that each policy's quantile function converges to the same point as $\tau \to 0$ and $\tau \to 1$, and that they vary maximally as $\tau \approx 0.5$. Furthermore, we observe that they reflect the divide-and-conquer property, where the largest variation is no larger than the variability of the value polytope itself.

\begin{figure}[t]
    \centering
    \includegraphics[width=.45\textwidth]{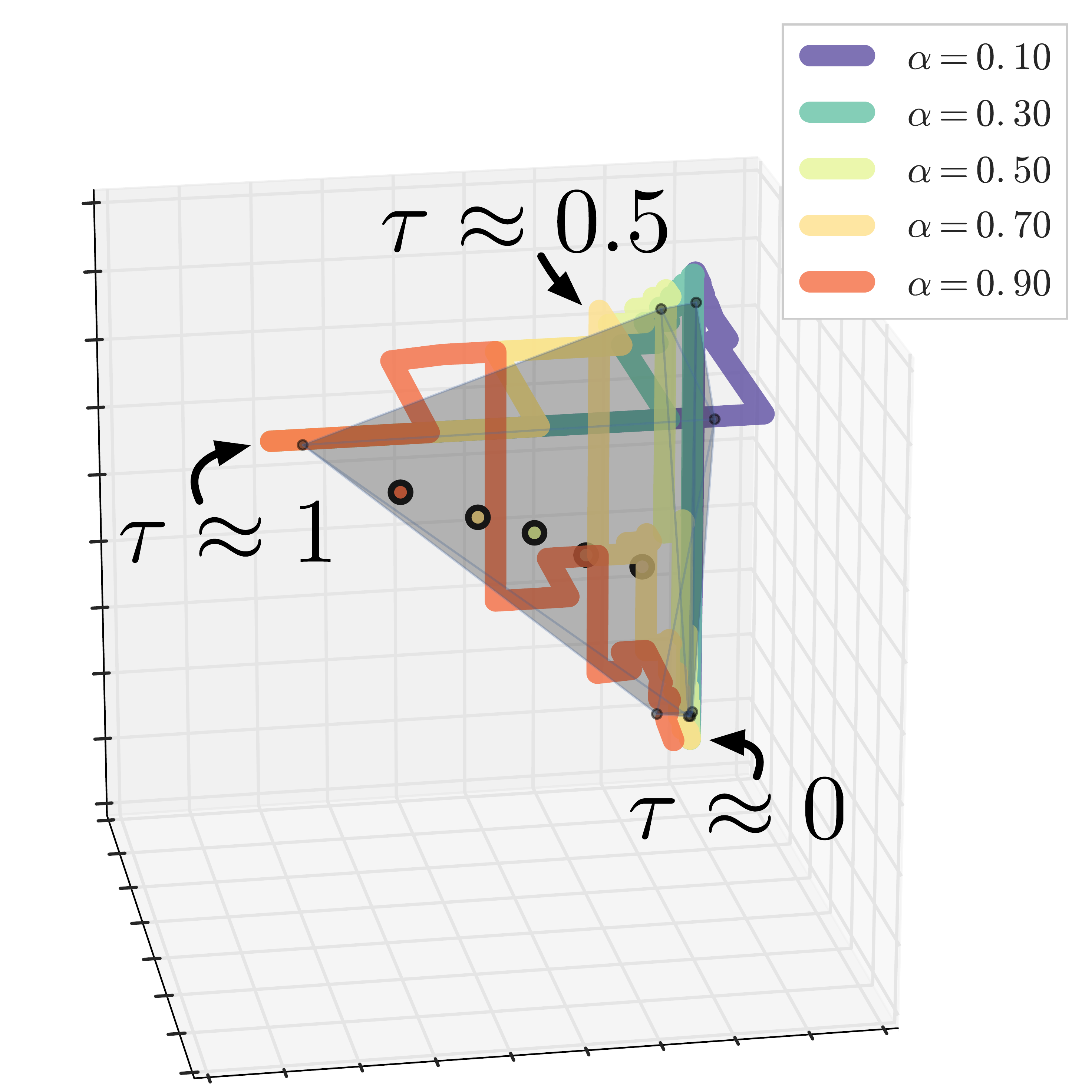}
    \caption{Example of Proposition~\ref{prop:smooth}, demonstrating that the quantile functions for different policies agree at the extremes and vary maximally around the median.}\label{fig:qpath3d}
\end{figure}

\begin{definition}
    A \textbf{deterministic MDP} is an MDP with deterministic transition dynamics and immediate rewards.
\end{definition}

\begin{restatable}{proposition}{propMixture}\label{prop:mix}
Suppose we have the following distributional RL learning update for the action-value distribution $\eta_n$:
$$\eta_{n+1} := (1 - \alpha) \eta_{n} + \alpha \eta^*_{n},$$
where $\eta^*_{n}$ is the state-action target distribution at update $n$ which we assume to be Diracs, and $\alpha \in (0, 1)$. We have that for $\alpha$ small enough, the $N$ quantiles of $\eta_{n}$ match exactly $N$ past values of the target history $\{ \eta^*_{j} \}_{j \leq n}$.
\end{restatable}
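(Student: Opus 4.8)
The plan is to first solve the linear recursion $\eta_{n+1} = (1-\alpha)\eta_n + \alpha\eta_n^*$ in closed form. Since each target is assumed to be a point mass, write $\eta_j^* = \delta_{z_j}$ for the scalar target value $z_j$ at update $j$. A one-line induction then gives
\[
\eta_n = (1-\alpha)^n \eta_0 + \sum_{j=0}^{n-1}\alpha(1-\alpha)^{n-1-j}\,\delta_{z_j},
\]
so $\eta_n$ is a finite mixture of Diracs located exactly at the past targets $z_0,\ldots,z_{n-1}$ (plus whatever support $\eta_0$ carries), with geometrically decaying mixing weights — the most recent target gets mass $\alpha$, the previous one $\alpha(1-\alpha)$, and so on, while the initialization retains mass $(1-\alpha)^n$.

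Next I would use the elementary fact that the (generalized inverse) quantile function of a finitely supported distribution is piecewise constant and takes each of its values at one of the atom locations. Hence, evaluated at the $N$ quantile levels under consideration — for concreteness the midpoints $\tau_i = (2i-1)/(2N)$, $i=1,\ldots,N$ — the quantile function of $\eta_n$ returns $N$ return values, each of which is one of the past targets $z_j$ (or, with the vanishing probability $(1-\alpha)^n$, a location of $\eta_0$); this already yields the "match past values of the target history" content with no constraint on $\alpha$. The role of the hypothesis that $\alpha$ is small is to ensure these are $N$ \emph{distinct} entries of the target history rather than a few atoms hit repeatedly: each past target contributes mass at most $\alpha(1-\alpha)^{n-1-j}\le\alpha$, and the residual mass $(1-\alpha)^n$ can be driven below $1/N$ by taking the number of updates large (the regime in which the initialization has washed out), so every atom of $\eta_n$ has mass strictly below the spacing $1/N$ between consecutive levels $\tau_i$; consequently no atom's probability interval can contain two of the $\tau_i$, and the map $i\mapsto F_{\eta_n}^{-1}(\tau_i)$ lands on $N$ distinct atoms, i.e. on $N$ distinct past target values.

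The main obstacle I anticipate is the possibility that a single return value recurs among the targets, since its total mass $\sum_{j\,:\,z_j=v}\alpha(1-\alpha)^{n-1-j}$ need not fall below $1/N$ however small $\alpha$ is (for instance if $v$ is the target at every update). I would handle this with a genericity assumption that the relevant recent targets are distinct — natural while the value estimate is still drifting along the value-improvement path — under which the per-atom mass is exactly $\alpha(1-\alpha)^{n-1-j}\le\alpha$ and the counting argument of the previous paragraph goes through verbatim. Beyond this point the proof is just the two one-line facts about mixtures of Diracs and their quantile functions, so I do not expect further difficulty.
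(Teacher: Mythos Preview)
Your approach is essentially the same as the paper's: unroll the recursion to write $\eta_n$ as the geometric mixture $\sum_{j=0}^{n-1}\alpha(1-\alpha)^{n-1-j}\delta_{z_j}+(1-\alpha)^n\eta_0$, then observe that when $\alpha<1/N$ each atom carries mass below $1/N$, so the $N$ quantile levels must fall on $N$ distinct past targets. Your treatment is in fact more careful than the paper's terse argument --- you make explicit the choice of quantile levels, the need for $n$ large enough that $(1-\alpha)^n<1/N$, and the genericity caveat about repeated target values, all of which the paper's proof silently assumes.
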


\begin{proof}
We can express $\eta_n(x,a)$ as a mixture of past targets:
\begin{align*}
    \eta_n(x,a) &= (1-\alpha)\eta_{n-1}(x,a) + \alpha \eta^*_{n-1}(x,a) \\
    &= \sum^{n-1}_{i=0} \alpha(1- \alpha)^{n-1-i} \eta^*_{i}(x,a) + (1 - \alpha)^n \eta_{0}(x,a)
\end{align*} 

Suppose $\alpha < \frac{1}{N}$, the distribution $\eta_n(x,a)$ is as mixture of $n$ Diracs $\eta^*_{i}(x,a)$ with associated weight $\alpha(1- \alpha)^{n-1-i} < \frac{1}{N}$. Therefore the N quantiles of $\eta_n(x,a)$ match $N$ past target distributions.

\end{proof}

\begin{figure*}
    \centering
    \includegraphics[width=\textwidth]{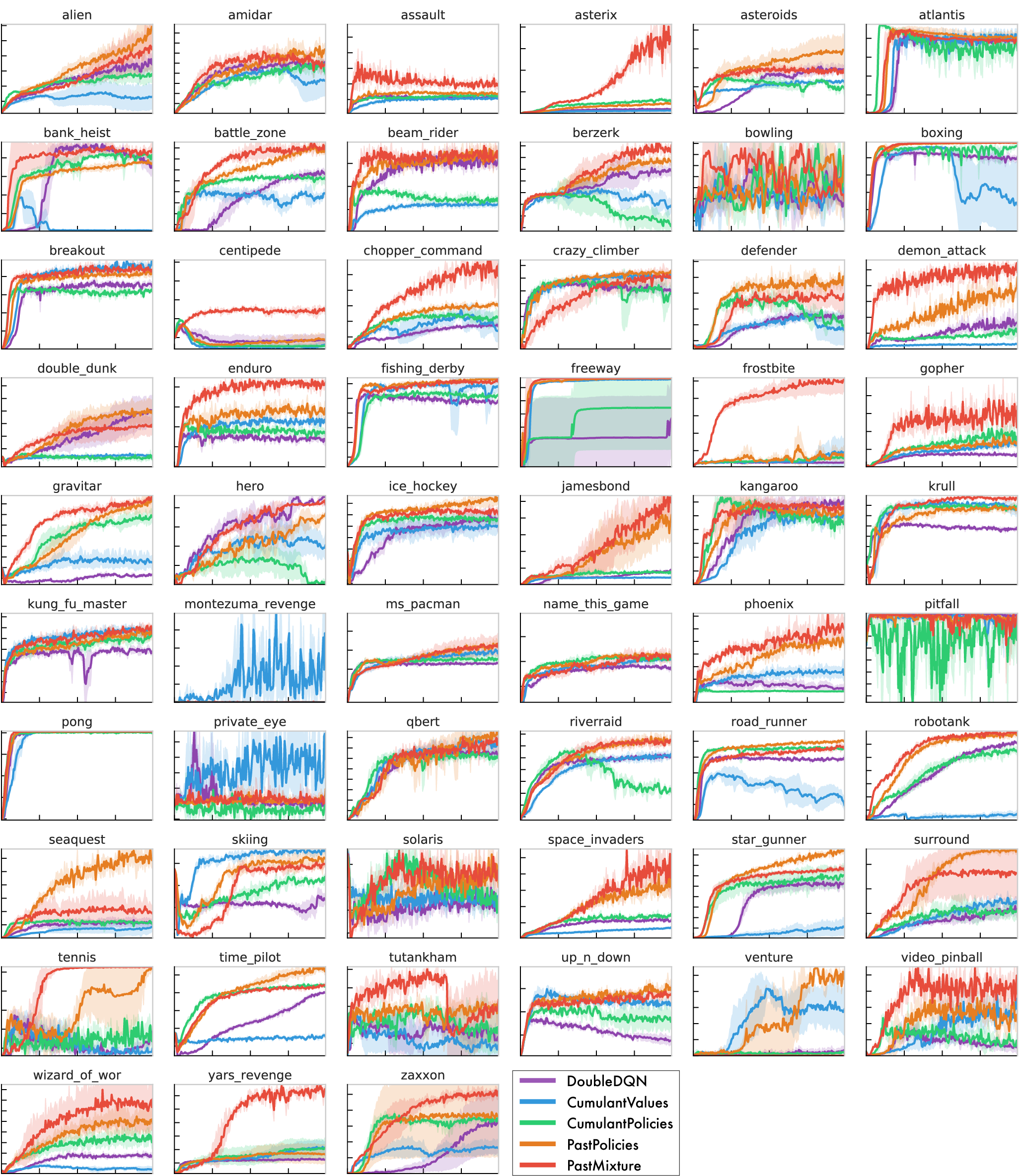}
    \caption{Atari-57 results for all methods, averages over three seeds and error bands show standard deviation.}\label{fig:full}
\end{figure*}

\end{document}